\documentclass{article}

\usepackage[preprint]{neurips_2023}

\usepackage{hyperref}       
\usepackage{url}
\usepackage[table,xcdraw]{xcolor}  
\usepackage{graphicx}
\usepackage{natbib}
\usepackage{booktabs}      
\usepackage{amsfonts, amsmath, amsthm}       
\usepackage{nicefrac}       
\usepackage{microtype}     
\usepackage{xcolor}
\usepackage[symbol]{footmisc}         
\usepackage{paralist}
\usepackage{comment}
\usepackage[linesnumbered,ruled,vlined]{algorithm2e}
\usepackage{array, booktabs, multicol, multirow, makecell, float} 
\usepackage{enumitem}
\title{Formatting Instructions For NeurIPS 2023}

%

\author{
    Ali Rasekh$^{\text{*}}$\\
    Leibniz University Hannover, Germany\\
    L3S Research Center\\
    \texttt{ali.rasekh@L3S.de}\\
    \And
    Sepehr Kazemi Ranjbar$^{\text{*}}$\\
    Independent Researcher\\
    \texttt{sepehrkazemi9@gmail.com} \\
    \And
    Simon Gottschalk\\
    Leibniz University Hannover, Germany\\
    L3S Research Center\\
    \texttt{gottschalk@L3S.de}\\   
}


\newtheorem{definition}{Definition}
\newtheorem{theorem}{Theorem}
\newtheorem{assumption}{Assumption}

\definecolor{Gray}{gray}{0.85}
\newcolumntype{a}{>{\columncolor{Gray}}c}

\title{Multi-Rationale Explainable Object Recognition via Contrastive Conditional Inference}

\begin{document}

\maketitle

\begingroup
  \renewcommand\thefootnote{*}%
  \footnotetext{Equal contribution}%
\endgroup

\begin{abstract}
Explainable object recognition using vision-language models such as CLIP involves predicting accurate category labels supported by rationales that justify the decision-making process. Existing methods typically rely on prompt-based conditioning, which suffers from limitations in CLIP's text encoder and provides weak conditioning on explanatory structures. Additionally, prior datasets are often restricted to single, and frequently noisy, rationales that fail to capture the full diversity of discriminative image features. In this work, we introduce a multi-rationale explainable object recognition benchmark comprising datasets in which each image is annotated with multiple ground-truth rationales, along with evaluation metrics designed to offer a more comprehensive representation of the task. To overcome the limitations of previous approaches, we propose a contrastive conditional inference (CCI) framework that explicitly models the probabilistic relationships among image embeddings, category labels, and rationales. Without requiring any training, our framework enables more effective conditioning on rationales to predict accurate object categories. Our approach achieves state-of-the-art results on the multi-rationale explainable object recognition benchmark, including strong zero-shot performance, and sets a new standard for both classification accuracy and rationale quality. Together with the benchmark, this work provides a more complete framework for evaluating future models in explainable object recognition. The code will be made available online.
\end{abstract}

\section{Introduction}

The increasing use of contrastive vision-language models such as CLIP \cite{radford2021learning} in multimodal retrieval tasks \cite{alpay2023multimodal, lin2024mm}—including object recognition \cite{chen2022gscorecam, novack2023chils, abdelfattah2023cdul}—has raised concerns about their reliability and explainability in sensitive applications such as tumour detection \cite{liu2023clip, liu2024universal} and object recognition in autonomous driving \cite{elhenawy2025vision, prajwal2021object}.

The goal of explainable object recognition \cite{hui2020uncertainty, pintelas2020explainable} is to build models that not only predict a category label for a given image but also generate one or more human-readable rationales to justify their decisions (e.g., \textit{“A photo of a $<$cat$>$ because there are $<$large round eyes$>$, $<$furry body$>$, and $<$triangle face$>$”}). For contrastive models such as CLIP, it is common to assume access to predefined sets of rationales and categories \cite{Mao2023Doubly, ecor}. Given an input image, the model must first assign appropriate rationales, i.e., $r_1, r_2, \dots, r_M$ (e.g., \textit{$<$large round eyes$>$, $<$furry body$>$, $<$triangle face$>$}) to the image $x$, and then predict the category $c$ (e.g., \textit{cat}) based on both the image and its associated rationales. This process can be represented using two conditional distributions:

\begin{inparaenum}[(i)]
    \item $P(r_1, r_2, \dots, r_M|x)$, modelling the distribution of rationales given the image, and
    
    \item $P(c|x, r_1, r_2, \dots, r_M)$, modelling the distribution of the category conditioned on the image and rationales.
\end{inparaenum}

The Doubly Right Object Recognition (DROR) benchmark \cite{Mao2023Doubly} provides a category $c$ and only one rationale $r$ per image $x$ -- a key limitation, which constrains the model’s explainability. Moreover, DROR does not model $P(r|x)$, a limitation later addressed by ECOR \cite{ecor}. Both DROR and ECOR model $P(c|r,x)$ by feeding prompts of the form \textit{"A photo of a $<c>$ because there is $<r>$"} into the CLIP text encoder, assuming the word \textit{"because"} creates an effective conditioning of the category on the rationale. However, as we show in this paper, CLIP’s text encoder struggles to handle such conditioning, likely due to its training on noisy image-text pairs. Additionally, this prompt-based conditioning fails to scale to multi-rationale scenarios, since CLIP is known to perform poorly on long text inputs \cite{zhang2024long}.

Our contributions in this paper are as follows:
\begin{itemize}
    \item We extend the DROR benchmark to a multi-rationale setting by augmenting each image with multiple rationales and generalising the associated evaluation metric.
    \item To address the conditioning limitations, we analyse the CLIP embedding space theoretically and empirically, and propose a novel contrastive training-free conditional inference (CCI) framework. Unlike traditional CLIP inference, which models $P(c|x)$ by projecting the category embedding $c$ onto the image embedding $x$, we construct a hyperplane that integrates all conditioning information (image and rationales), thereby generalising inference from a single vector to a subspace in the embedding space.
    \item Finally, we demonstrate through extensive quantitative and qualitative experiments that our method achieves state-of-the-art performance, outperforming previous approaches in both single-dataset and cross-dataset evaluations. In particular, our performance in the zero-shot setting even surpasses supervised methods.
\end{itemize}

\section{Related Work} \label{sec:related}

\subsection{Explainability Approaches in Visual Recognition}
The field of explainable visual recognition has evolved along several complementary directions. Traditional approaches include gradient-based techniques such as Grad-CAM and its extensions \cite{selvaraju2017grad, Chattopadhay2018GradCAM++}, integrated gradients \cite{Sundararajan2017Integrated}, and SmoothGrad \cite{smilkov2017smoothgrad}, which identify important image regions by analyzing gradient flows. Perturbation-based methods like LIME \cite{ribeiro2016should} and RISE \cite{petsiuk2018rise} systematically modify inputs to assess feature importance. However, these techniques often struggle to provide semantically meaningful explanations for complex vision-language models.
More recent advances have focused on attribute and concept-based explanations. Pioneering datasets like AwA \cite{lampert2009learning} and CUB \cite{WahCUB_200_2011, michieli2019incremental} incorporate human-annotated attributes that serve as intermediate explanatory features. Building on this foundation, Concept Bottleneck Models \cite{pmlr-v119-koh20a} enforce a modular structure where classification decisions explicitly depend on predicted high-level concepts. Similarly, TCAV \cite{pmlr-v80-kim18d} quantifies concept influence through directional sensitivity analysis, while ProtoPNet \cite{chen2019looks} grounds decisions in prototypical examples. These methods often need fine-tuning to provide promising performance, while our approach, without any training, achieves state of the art performance even compared to supervised approaches.

\subsection{Text-Based and Multimodal Explanations}
The latest developments in explainable object recognition have embraced natural language as a flexible medium for expressing model rationales. \cite{Mao2023Doubly} introduced the "Doubly Right" benchmark, requiring models to generate both correct labels and accurate verbal justifications for their decisions \cite{Mao2023Doubly}. Their approach employs "why prompts" to elicit explanations from vision-language models, though it faces challenges in properly conditioning rationale generation. Similarly, \cite{ecor} proposed ECOR, which models explainability as a joint distribution over categories and rationales. While effective for zero-shot scenarios, this method encounters limitations with the CLIP text encoder since it utilizes the negations, and other types of semantic words (e.g., \textit{because}) that can be understood by LLMs, but not by the CLIP text encoder \cite{kamath2023text, chen2023difference}. Also in \cite{bhalla2024interpreting}, a zero-shot approach for sparse projection of image embedding on concept embeddings was introduced, which needs to solve an optimization problem explicitly.

Multimodal explanation approaches combine visual attention mechanisms with textual descriptions to enhance interpretability. \cite{Liu2023LLaVA} introduced LLaVA, leveraging multimodal attention to generate explanations for visual decisions \cite{Liu2023LLaVA}, while \cite{Zhang2024MMCoT} proposed MM-CoT, using multimodal chain-of-thought reasoning for explainable visual understanding \cite{Zhang2024MMCoT}. Approaches like X-VLM \cite{Zeng2022X2VLM} incorporate region-text alignment for more interpretable results, and e-ViL \cite{Kayser2021EViL} proposes a visual-entailment framework for generating explanations in vision-language tasks. These methods represent significant progress but still face challenges in generating explanations that are both faithful to the model's decision process and grounded in specific visual attributes. Further, these approaches typically rely on a single rationale per image, while we target identifying multiple rationales.

\section{Multi-Rationale Benchmark}\label{sec:MRDG}

In this section, we propose an automated pipeline to transform single-rationale object recognition datasets \cite{Mao2023Doubly} into multi-rationale versions and extend the evaluation metric accordingly.

\subsection{Extending Datasets}
In the DROR benchmark \cite{Mao2023Doubly}, each dataset (CIFAR-10$^+$, CIFAR-100$^+$, Food-101$^+$, Caltech-101$^+$, SUN$^+$, and ImageNet$^+$)\footnote{These datasets share only category names with their popular counterparts—CIFAR-10 \cite{krizhevsky2009learning}, CIFAR-100 \cite{krizhevsky2009learning}, Food-101 \cite{bossard2014food}, Caltech-101 \cite{li_andreeto_ranzato_perona_2022}, SUN \cite{xiao2010sun}, and ImageNet \cite{5206848}. The images are sourced from Google Image Search using queries of the form \textit{"A photo of $<$category$>$ which has $<$rationale$>$"}.} consists of samples in the triplet form $(x, c, r)$, where $x$ is an image, $c$ is the image category, and $r$ is the rationale associated with $c$ and $x$.

Although categories were annotated with $\sim$7 rationales, only one was assigned per image. We extend the number of rationales per image by using an open-vocabulary object detector \cite{NEURIPS2023_e6d58fc6}, retaining those with confidence above 0.1 (as recommended in \cite{NEURIPS2023_e6d58fc6}). This increases the average number of rationales per image from $1$ to an average of $3$, which is sufficient to generate meaningful explanations. The whole process is under human supervision to mitigate any mistakes or specific biases caused by the object detector in identifying rationales. We refer to the resulting multi-rationale datasets as CIFAR-10$^\dagger$, CIFAR-100$^\dagger$, Food-101$^\dagger$, Caltech-101$^\dagger$, SUN$^\dagger$, and ImageNet$^\dagger$. Appendix \ref{sec:DS} presents statistics for our multi-rationale datasets.

\subsection{Evaluation Metrics}\label{subsubsec:EM}

Based on the single-rationale metrics proposed in \cite{Mao2023Doubly}, we propose the following four metrics for our multi-rationale benchmark per image:

\begin{itemize}
    \item $\textbf{RR} \in [0,1]$: $1(\text{category is right})\cdot \text{accuracy}_{\text{rationales}}$,
    \item $\textbf{RW} \in [0,1]$: $1(\text{category is right}) \cdot (1 - \text{accuracy}_{\text{rationales}})$,
    \item $\textbf{WR} \in [0,1]$: $1(\text{category is wrong})\cdot \text{accuracy}_{\text{rationales}}$,
    \item $\textbf{WW} \in [0,1]$: $1(\text{category is wrong})\cdot (1 - \text{accuracy}_{\text{rationales}})$,
\end{itemize}
where $1(\cdot)$ is the indicator function and $\text{accuracy}_{\text{rationales}}=\frac{|\text{correct rationales}|}{|\text{ground-truth rationales}|}$. The four values sum to $100\%$, and the goal is to maximise \textbf{RR} (i.e., categories and rationales are identified correctly).

\section{Method}\label{sec:method}

\subsection{Preliminaries: CLIP}\label{subsec:pr}
Contrastive Language-Image Pretraining (CLIP)~\cite{radford2021learning} is a widely used VLM trained on $400$ million (image, text) pairs. It consists of an image encoder and a text encoder, both of which project their inputs into a shared $d$-dimensional unit sphere in $\mathbb{R}^d$. The similarity between an image and a text is computed via the dot product $\odot$ between their respective embeddings.

In image classification, given an image embedding $x$ (obtained from CLIP's image encoder) and a predefined set of $N$ category embedding $\mathcal{C}=\left\{c_i\right\}_{i\in [N]}$ (obtained from CLIP's text encoder), CLIP selects the category $\hat{c}$ as follows:

\begin{equation}
    \tilde{c} = \text{argmax}_{c\in \mathcal{C}}\  x \odot c{.}\label{eq:1}
\end{equation}
 The conditional distribution $P(c|x)$ can be defined using a softmax function:
\begin{align}\label{eq:vci}
    P(c|x) = \frac{\exp\left( \tau\,\, c\odot x \right)}{\sum_{c'\in \mathcal{C}}\exp\left(\tau\,\, c'\odot x \right)}{,}
\end{align}
where $\tau$ is a temperature parameter that controls the sharpness of the distribution: lower values yield smoother distributions, while higher values produce sharper peaks. In uncertainty calibration tasks, it is common to use lower values of $\tau$~\cite{guo2017calibration}.

\begin{figure}[t]
    \centering
    \includegraphics[width=\linewidth]{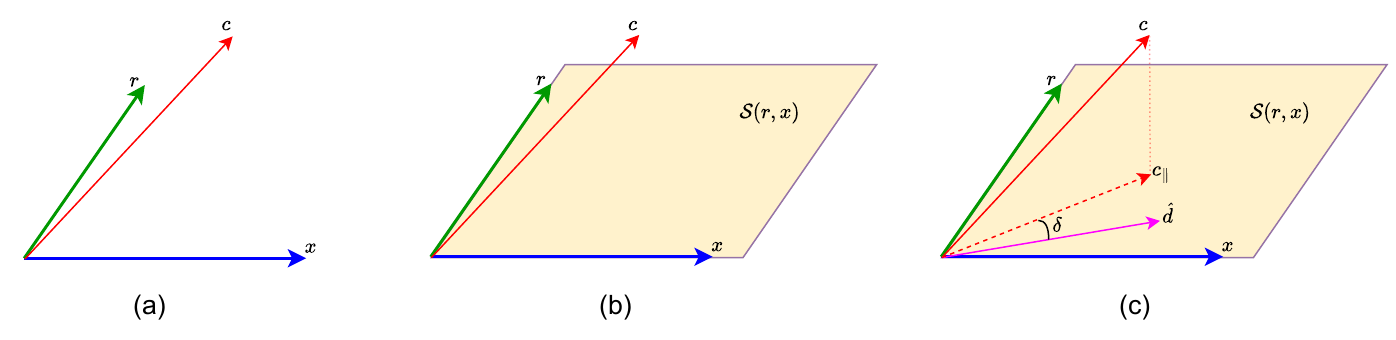}
    \caption{Illustration of Contrastive Conditional Inference (CCI). (a) We consider a triplet $(x, c, r)$ and aim to estimate $P(c|x, r)$. (b) Constructing the hyperplane $S(x, r)$. (c) Projecting $c$ onto the hyperplane, denoted $c_\parallel$, and computing its alignment with a desirable direction $\hat{d}$ using $|c_\parallel|\cos \delta$. This is followed by a softmax step to compute $P(c|x, r)$.}
    \label{fig:1}
\end{figure}

\subsection{Conditional Inference}

\subsubsection{Motivation}

As discussed earlier, previous methods \cite{Mao2023Doubly, ecor} model conditioning on more than one condition via the term \textit{"because"} in the input textual prompt: \texttt{A photo of a <$c$> \textbf{because} there is <$\hat{r}$>}.
Due to the limitations of CLIP's text encoder, these approaches lack a probabilistic formulation of conditioning, especially when handling multiple conditions \cite{zhang2024long}. 
To show this, we examine whether they satisfy Bayes’ rule. Given a ground-truth triplet (image, category, rationale), i.e., $(x, \hat{c}, \hat{r})$, we define the ratio $RT$ as follows:

\begin{equation}
    RT = \frac{P(\hat{c}|x)P(\hat{r}|\hat{c},x)}{P(\hat{r}|x)P(\hat{c}|\hat{r},x)} {.} \label{equation:RT}
\end{equation}
Since the distributions $P(\hat{c}|x)$ and $P(\hat{r}|x)$ can be accurately estimated from the vanilla CLIP inference (Eq.~\ref{eq:vci}), the ratio $RT$ would be close to $1$, iff the conditional distributions $P(\hat{c}|\hat{r},x)$ and $P(\hat{r}|\hat{c},x)$ were accurately estimated. Table \ref{tab:conditioning} shows $RT$ values for several datasets 
(lower temperatures are more valid for this experiment, see Section \ref{subsec:pr}), showing that the use \textit{"$<$because$>$"} fails to model conditioning.

\begin{table}[t]
    \centering
    \footnotesize
    \caption{$RT$ values as defined in Eq.~\ref{equation:RT} for different datasets and temperatures (best results in each setting in bold). Values that are closer to 1 are better and demonstrate a more accurate formulation of conditioning.}
    \label{tab:conditioning}
    
    \begin{tabular}{cc|rrrr}
        \toprule

        \textbf{Temperature $\tau$} & \textbf{Method} & \textbf{CIFAR-10$^\dagger$} & \textbf{CIFAR-100$^\dagger$} & \textbf{Food-101$^\dagger$}  & \textbf{SUN$^\dagger$}\\
        \midrule

        \multirow{2}{*}{0.5} & $<$because$>$ & 5.675 & 4.760 & 4.455  & 4.336\\
        & Ours & \textbf{1.004} & \textbf{1.007} & \textbf{1.011} & 
        \textbf{1.009}\\
        \midrule
        
        \multirow{2}{*}{1.0} & $<$because$>$ & 5.752 & 4.839 & 4.559 & 4.397\\
        & Ours & \textbf{1.008} &  \textbf{1.014} & \textbf{1.023} & \textbf{1.018}\\
        \midrule
        
        \multirow{2}{*}{10} & $<$because$>$ & 7.522 & 6.892 & 7.294 & 5.994\\
         & Ours & \textbf{1.101} & \textbf{1.228} & \textbf{1.414} & \textbf{1.356}\\
        \midrule
        
        \multirow{2}{*}{20} & $<$because$>$ & 10.491 & 11.486 & 13.696 & 9.594\\
         & Ours & \textbf{1.271} & \textbf{1.709} & \textbf{2.549} & \textbf{2.389}\\
        \midrule
        
        \multirow{2}{*}{50} & $<$because$>$ & 33.230 & 68.093 & 98.228  & 66.231\\
         & Ours & \textbf{1.854} & \textbf{4.428} & \textbf{7.810} & \textbf{25.755}\\
        \bottomrule
    \end{tabular}

\end{table}

 \subsubsection{Contrastive Conditional Inference}
 \begin{definition}[{Conditioning Problem}]
     Given a target embedding $c$, an image embedding $x$ and its corresponding rationales embeddings $\left\{\hat{r_i}\right\}_{i\in [m]}$, the goal is to estimate the conditional probability $p(c|x, \hat{r_1},\hat{r_2},\dots,\hat{r_M})$.
 \end{definition}
 

Before proposing our solution, we make the following assumption:
\begin{assumption}\label{assump:1}
    For a ground-truth $(\hat{c}, \hat{x}, \{\hat{r}_i\}_{i\in [M]})$, the embedding $\hat{c}$ should be as close (dot product) as possible to all rationales $\{\hat{r}_i\}_{i\in [M]}$ and image $x$.
\end{assumption}

This is a generalisation of the CLIP training objective, where relevant concepts (image, text) should be as close as possible. Considering this assumption, the following theorem gives a necessary condition for the configuration of embeddings $(\hat{c}, \hat{x}, \{\hat{r}_i\}_{i\in [M]})$.

\begin{theorem}\label{th:1}
    If embedding $\hat{c}$ is as close as possible to all embeddings $x$ and $\{\hat{r}_i\}_{i\in [M]}$, then it lies on the hyperplane of $x$ and $\{r_i\}_{i\in [M]}$ defined as $S(x,\hat{r}_1,\dots, \hat{r}_M)$.
\end{theorem}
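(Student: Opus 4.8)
The plan is to read Assumption~\ref{assump:1} as a constrained optimization statement on the unit sphere and then locate its optimizers geometrically. Write $v_0=x$ and $v_i=\hat r_i$ for $i\in[M]$, all unit vectors in $\mathbb{R}^d$, and interpret ``$\hat c$ is as close (dot product) as possible to all of $x$ and $\{\hat r_i\}$'' as: $\hat c$ maximizes the aggregate closeness $F(c)=\sum_{j=0}^M c\odot v_j$ over the constraint set $\|c\|=1$ (a weighted version $\sum_j\lambda_j\,c\odot v_j$, or full Pareto-optimality of the vector $(c\odot v_0,\dots,c\odot v_M)$, works the same way). I would also fix notation: $S(x,\hat r_1,\dots,\hat r_M)$ is the affine subspace spanned by $x$ and the $\hat r_i$ -- the object called a ``hyperplane'' in Figure~\ref{fig:1} onto which $c$ is projected -- so that ``$\hat c$ lies on $S$'' is precisely ``$\hat c$ is an affine (indeed linear) combination of $x$ and $\{\hat r_i\}$''.

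The core step is a projection/exchange argument. Decompose $\hat c=\hat c_\parallel+\hat c_\perp$ with $\hat c_\parallel\in S$ and $\hat c_\perp\perp S$. Since each $v_j\in S$, we have $\hat c\odot v_j=\hat c_\parallel\odot v_j$ for all $j$, hence $F(\hat c)=F(\hat c_\parallel)$. If $\hat c_\perp\neq 0$ then $\|\hat c_\parallel\|<1$, so the unit vector $c'=\hat c_\parallel/\|\hat c_\parallel\|$ satisfies $F(c')=F(\hat c_\parallel)/\|\hat c_\parallel\|>F(\hat c_\parallel)=F(\hat c)$, provided $F(\hat c)>0$; and $F(\hat c)>0$ at the optimum because already the feasible point $\bigl(\sum_j v_j\bigr)\big/\bigl\|\sum_j v_j\bigr\|$ attains $F=\bigl\|\sum_j v_j\bigr\|>0$ under the mild nondegeneracy $\sum_j v_j\neq 0$. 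This contradicts optimality of $\hat c$, so $\hat c_\perp=0$ and $\hat c\in S$. As a cross-check, a Lagrange-multiplier computation gives the same conclusion directly: stationarity of $F$ on the sphere forces $\sum_j v_j=\mu\,\hat c$, hence $\hat c=\bigl(\sum_j v_j\bigr)\big/\bigl\|\sum_j v_j\bigr\|\in S$; for the weighted/Pareto readings one instead gets $\hat c\propto\sum_j\lambda_j v_j$ with $\lambda_j\ge 0$, still inside $S$.

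The hard part is not the optimization, which is elementary, but pinning down the formalization so that what is proved matches the statement: one must commit to a precise meaning of ``as close as possible to all'' (aggregate sum, weighted sum, or Pareto-optimality) and to the meaning of $S$, since a literal codimension-one hyperplane through $M+1$ points is underdetermined when $M+1<d-1$. I would resolve this by declaring $S$ to be the affine hull of $\{x,\hat r_1,\dots,\hat r_M\}$ -- the usage consistent with the projection step in Figure~\ref{fig:1} -- and by remarking that all three readings of Assumption~\ref{assump:1} yield the same conclusion because each forces $\hat c$ into $\mathrm{span}\{x,\hat r_1,\dots,\hat r_M\}$. A secondary wrinkle to state explicitly is the nondegeneracy hypothesis $\sum_j v_j\neq 0$: without it no vector is simultaneously closest to all the $v_j$, so the hypothesis of the theorem is vacuous and the extra assumption is harmless.
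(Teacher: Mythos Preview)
Your proposal is correct and uses essentially the same projection/exchange argument as the paper: decompose $\hat c=\hat c_\parallel+\hat c_\perp$, observe that only $\hat c_\parallel$ contributes to each $\hat c\odot v_j$, and show that renormalizing $\hat c_\parallel$ to unit length improves closeness, forcing $\hat c_\perp=0$. The only stylistic difference is that the paper argues the improvement coordinate-wise (each $\mathrm{sim}(c,\hat r_i)$ increases, i.e., your Pareto reading) rather than through the aggregate $F$, and it is less explicit than you are about the positivity/nondegeneracy needed for the inequality to be strict.
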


The proof is given in Appendix \ref{subsec:proof1}. Now, given above theorem and a set of ground-truth rationales for an image, we project every category embedding $c\in \mathcal{C}$ on $S(x,\hat{r}_1,\dots, \hat{r}_M)$, and check which of them has the largest projection. Since the direction of the projected category matters as well, we additionally define a desirable direction in $S(x,\hat{r}_1,\dots, \hat{r}_M)$:


\begin{definition}[Desirable Direction]\label{def:DD}
    We call a direction $d_1$ to be more desirable than $d_2$, i.e, $d_1 \succeq d_2$, if $d_1\odot x \geq d_2 \odot x$, and $d_1\odot \hat{r}_i \geq d_2 \odot \hat{r}_i,\,\,\, \forall i\in [M]$.
\end{definition}

Now, we consider a coefficient vector $W=[w_0,w_2,\dots, w_M]$, where $\tilde{W}\subseteq W$, include all $w_i \le 0$. Every possible direction in the hyperplane  $S(x,\hat{r}_1,\dots, \hat{r}_M)$ can be obtained by adjusting the coefficient vector $W$. The following theorem gives the necessary condition for a desirable direction.

\begin{theorem}\label{th:2}
    
    Given a direction $d = \frac{w_0\, x + \sum_{i\in [M]}w_i\, r_i}{\left|w_0\, x + \sum_{i\in [M]}w_i\, r_i\right|}$ in hyperplane $S(x,\hat{r}_1,\dots, \hat{r}_M)$. One can always get a more desirable direction (Definition \ref{def:DD}) by inverting $w_j\in \tilde{W}$.
\end{theorem}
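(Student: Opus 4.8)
\ The plan is to exhibit an explicit competitor $d'$, obtained from $d$ by negating one (or all) of the non-positive coefficients collected in $\tilde{W}$, and to check the two families of inequalities in Definition~\ref{def:DD} one conditioning embedding at a time. Write $v = w_0\,x + \sum_{i\in[M]} w_i\,\hat{r}_i$ for the unnormalised vector underlying $d = v/|v|$, and let $e_j$ be the $j$-th conditioning embedding, so that $e_0 = x$ and $e_i = \hat{r}_i$ for $i\in[M]$. Negating a coefficient $w_j\le 0$ replaces $v$ by $v' = v - 2w_j\,e_j = v + 2|w_j|\,e_j$; this is still a linear combination of $x$ and the $\hat{r}_i$, so $d' = v'/|v'|$ is again a direction in $S(x,\hat{r}_1,\dots,\hat{r}_M)$, namely $d$ nudged toward $e_j$ by a non-negative amount. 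The first step is this reduction: each $d'\odot u$ and $d\odot u$ is only a rescaling of $v'\odot u$ and $v\odot u$, so it suffices to understand how the raw alignments $v\odot u$ change as $u$ ranges over $\{x,\hat{r}_1,\dots,\hat{r}_M\}$.

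The easy half is then immediate. For every conditioning embedding $u$,
\[
v'\odot u \;=\; v\odot u \;+\; 2|w_j|\,(e_j\odot u),
\]
and $e_j\odot u \ge 0$: for $u=e_j$ this equals $|e_j|^2 = 1$, and for $u\ne e_j$ it is the similarity of two embeddings describing the same image, which is non-negative under Assumption~\ref{assump:1} (a ground-truth category is forced close to both, hence the two are mutually close on the unit sphere). Therefore $v'\odot u\ge v\odot u$ for all conditioning embeddings at once, which says exactly that $d'$ dominates $d$ at the level of unnormalised directions; inverting further elements of $\tilde{W}$ only contributes more non-negative terms, so one may invert all of $\tilde{W}$ simultaneously.

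The last step, and the genuine obstacle, is the renormalisation: in general $|v'|\ne|v|$, so $v'\odot u\ge v\odot u$ does not by itself give $\tfrac{v'\odot u}{|v'|}\ge\tfrac{v\odot u}{|v|}$. To handle this I would restrict to the plane $\Pi = \mathrm{span}(v,e_j)$: replacing $u$ by its projection onto $\Pi$ leaves $v\odot u$, $e_j\odot u$ and $|v'|$ unchanged and does not increase $|u|$, so the claim reduces to showing that the unit vector $v'/|v'|$ makes no larger an angle with the projection of $u$ than $v/|v|$ does. Inside $\Pi$, $v'/|v'|$ is $v/|v|$ rotated monotonically along the short arc toward $e_j$, and one is left to argue that this rotation does not move away from any conditioning embedding $u$ with $e_j\odot u\ge 0$. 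This is the crux, and it is where the strength of the mutual alignment of the conditioning embeddings is actually used: I would split on the sign of $v\odot u$ and on the sign of $|v'|-|v|$ (equivalently, on the sign of $v\odot e_j + |w_j|$), observe that the two ``matching sign'' cases are the only non-trivial ones, and in those reduce everything to the single scalar estimate $2|w_j|\,|v|\,(e_j\odot u)\ge (v\odot u)\,(|v'|-|v|)$, which one verifies using that the off-diagonal Gram entries of $\{x,\hat{r}_1,\dots,\hat{r}_M\}$ are sufficiently close to $1$.
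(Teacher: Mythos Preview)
Your setup and the reduction to the scalar inequality
\[
2|w_j|\,|v|\,(e_j\odot u)\;\ge\;(v\odot u)\,(|v'|-|v|)
\]
are correct, and you are right that renormalisation is the only genuine obstacle. The gap is in how you propose to discharge that inequality. You claim it follows ``using that the off-diagonal Gram entries of $\{x,\hat{r}_1,\dots,\hat{r}_M\}$ are sufficiently close to $1$.'' That hypothesis is nowhere in the statement or in Assumption~\ref{assump:1}, it is empirically false for CLIP embeddings (such similarities are typically well below $0.5$), and---crucially---it is not what is needed. The paper's proof uses only that the relevant inner products lie in $[0,1]$, nothing about closeness to $1$. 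So your final step is not just unexecuted; the mechanism you suggest for executing it would not work.

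For comparison, the paper treats the case $M=1$ (and asserts the general case is analogous) by a purely algebraic move that you did not try: with $w_1<0\le w_0$ one must show
\[
\frac{w_1+w_0\,(x\odot \hat r)}{|w_1\hat r+w_0x|}\;\le\;\frac{-w_1+w_0\,(x\odot \hat r)}{|-w_1\hat r+w_0x|}\,.
\]
If the left side is negative this is immediate; otherwise square both sides, clear the (positive) denominators $w_0^2+w_1^2\pm 2w_0w_1(x\odot\hat r)$, and after cancellation the entire inequality collapses to
\[
(x\odot\hat r)\;\ge\;(x\odot\hat r)^3,
\]
which holds for any $x\odot\hat r\in[0,1]$. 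The same identity falls out for $\mathrm{sim}(d,x)$. Your plane-projection heuristic is a reasonable way to \emph{picture} why the rotation toward $e_j$ helps, but to turn it into a proof you still need this (or an equivalent) computation; the appeal to near-unit Gram entries does not substitute for it.
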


The proof is given in Appendix \ref{subsec:proof2}. This theorem states that a desirable direction is a positive weighted average of embeddings $x$ and $\{r_i\}_{i\in [M]}$. This can be seen in vanilla CLIP: there is one condition, and the desirable direction is precisely the condition (not its opposite). 

\begin{algorithm}[tb]
\caption{Contrastive Conditional Inference (CCI)}
\label{algo:cci}

\KwIn{\(x\in\mathbf{R}^d,\,\, \{\hat{r}_i\in\mathbf{R}^d\}_{i\in [M]},\,\, \mathcal{C}=\{c\in\mathbf{R}^d\}\)}
\KwOut{\( P_{\mathcal{C}} = \left[P\left(c|\hat{r}_1,\hat{r}_2, \dots, \hat{r}_M)\right),\,\, \forall c\in \mathcal{C}\right] \)}
$S \gets [x, \hat{r}_1, \hat{r}_2, \cdots, \hat{r}_M]$;x \tcp{Hyperplane $S$}
$\hat{d} \gets \frac{x + \hat{r}_1+\dots+\hat{r}_{M}}{\left|x + \hat{r}_1+\dots+\hat{r}_{M}\right|}$ \tcp{Desirable Direction}
\For{$c \in \mathcal{C}$}{
  $z_{c} \gets (S^TS)^{-1}S^T c$ \tcp{MSE estimation of $S\,z_c = c$}
  $c_{\parallel} \gets S\,z_c$ \tcp{Projection of $c$ on $S$}
  $\text{logit}_{c} \gets c_{\parallel} \odot \hat{d}$ \tcp{Closeness of the $c_{\parallel}$ with $\hat{d}$}
}

$P_{\mathcal{C}} \gets \text{SoftMax}\left(\left\{\text{logit}_{c}\right\}_{c\in\mathcal{C}}\right)$ \tcp{Normalising logits}
\Return \( P_{\mathcal{C}} \)
\end{algorithm}

Resulting from the above theorem, all directions that are a positive weighted average of the image and the rationales can be our desired direction. To obtain precisely $w_0, w_1, \dots, w_M$, we need prior information about the importance of images and rationales for a category, which is not available in most cases. So, we consider the uniform case where all $w_i$'s are equal, which leads to 
\begin{align}
    \hat{d} = \frac{x+r_1+r_2+\dots+r_M}{|x+r_1+r_2+\dots+r_M|}{.}
\end{align}
Now, given the desirable direction in the constructed hyperplane, we simply compute the dot product of the projected $c$, i.e., $c_\parallel$, with this direction. It is important to note that the above two theorems are valid in ideal cases, but in practice, we would like to find the category that satisfies them better. Algorithm \ref{algo:cci} represents our contrastive conditional inference (CCI). Figure \ref{fig:1} shows an illustrative example of our CCI algorithm. Also, when we have only one condition (image $x$), CCI is simplified to vanilla CLIP inference ($P(c|x)$) as in Eq. \ref{eq:1}. Also, the result of the conditioning experiment (Eq. \ref{equation:RT}) is available in Table \ref{tab:conditioning}, where our method outperforms the previous approach across all datasets. Moreover, a visualisation of saliency maps for our conditioning method versus the previous approaches is available in Appendix \ref{sec:visual}.

\subsection{Model}
\subsubsection{Explainable Image Classification: Inference} \label{subsec:Model}
Our goal is to maximise the following objective \cite{ecor}:
\begin{align}
 \max_{c,r_1,\dots,r_M}\quad P(c,r_1,r_2,\dots, r_M| x) \equiv \max_{c,r_1, \dots, r_M} \quad P(r_1,r_2,\dots, r_M| x)P(c|r_1,r_2,\dots,r_M,x) \label{eq:15}
\end{align}

The term $P(c|r_1,r_2,\cdots,r_M,x)$ can be calculated by Algorithm \ref{algo:cci}. However, at the first step, we must find rationales in the image $x$, which is challenging since even with knowing the number of rationales, i.e., $M$, we need to consider $|\mathcal{R}|\choose M$ candidate sets of rationales, which is intractable for large datasets. Here, we propose Algorithm \ref{algo:fr} that follows an iterative manner for selecting rationales that maximise the joint distribution, which has both good performance and efficiency. For a deeper search, Algorithm \ref{algo:fr} can be implemented by using beam search with $K_{\text{beam}}$. After obtaining rationales, one can call Algorithm \ref{algo:cci} to find the category.
\begin{algorithm}[tb]
\label{algo:fr}
\caption{Finding Rationales}
\KwIn{\(M,\,\,\, x\in\mathbf{R}^d,\,\, \mathcal{R}=\{r\in\mathbf{R}^d\},\,\, \mathcal{C}=\{c\in\mathbf{R}^d\}\)}
\KwOut{\( \hat{R} = \left[\hat{r}_1, \hat{r}_2, \dots, \hat{r}_M\right] \)}
$R \gets [r_1, r_2, \dots, r_{|\mathcal{R}|}]$ \tcp{Matrix of all rationales in $\mathcal{R}$}
$counter \gets 0$\;
$\hat{R} \gets \text{None}$ \tcp{Matrix of selected rationales}
\For{$i \gets 1$  \KwTo $M$}{
    $R' \gets R \setminus \hat{R}$ \tcp{Do not consider selected rationales}
    $P_{R'} \gets \text{SoftMax}(x^T R')$ \tcp{Probabilities of rationales in $R'$}
    \For{$(r',p_{r'})\in (R', P_{R'})$}{
        $P_{\left.\mathcal{C}\right|r'} \gets \textsc{CCI}(x,\, \hat{R}\cup \{r'\},\, \mathcal{C})$ \tcp{Calling Algorithm \ref{algo:cci}}
        $P_{\mathcal{C},{r'}} \gets p_{r'} \times P_{\left.\mathcal{C}\right|r'}$ \tcp{Joint probability of candidate $r'$}
    }
    $\hat{r}_i, \hat{c} \gets \max_{r', c}\quad \left\{P_{\mathcal{C},r'}\right\}_{r\in R'}$ \tcp{Selecting best candidate $r'$}
    $\hat{R} \gets \hat{R} \cup \{\hat{r}_i\}$ \tcp{Appending to selected rationales}
}
\Return \( \hat{R}\)
\end{algorithm}

\subsubsection{Explainable Image Classification: Fine-tuning}
Although our approach does not require retraining CLIP, we fine-tune CLIP in some experiments. Our fine-tuning objective is the same as in Eq.~\ref{eq:15}, expressed through this loss:
\begin{align}
    \mathcal{L} = \mathbf{E}_{x}\left[-\log P(\hat{r}_1,\hat{r}_2,\dots,\hat{r}_M|x) - \log P(\hat{c}|\hat{r}_1,\hat{r}_2,\dots,\hat{r}_M,x)\right]
\end{align}
where $\{\hat{r}_i\}_{i\in [M]}, \hat{c}$ are ground-truth rationales and category of image $x$.

\section{Experiments}
We conduct comprehensive experiments to demonstrate the superiority of our proposed approach over existing methods.

\subsection{Baselines}
We compare our approach with two baselines (other similar approaches could not be well fitted to this task and setup): DROR \cite{Mao2023Doubly} and ECOR \cite{ecor}. DROR models the conditional distribution $P(c|r, x)$ using the prompt structure \textit{"This is a photo of a $<$category$>$ because there is $<$rationale$>$"}. In contrast, ECOR models the joint distribution $P(c, r|x) = P(r|x)P(c|r,x)$, where the conditioning component is the same as in DROR. Both methods rely on the word ``because'' for conditioning, which we have shown to be ineffective in Table \ref{tab:conditioning}.

\subsection{Implementation Details}
We employ \textit{CLIP ViT-L/14} for the CIFAR-10$^\dagger$, CIFAR-100$^\dagger$, Food-101$^\dagger$, Caltech-101$^\dagger$, and SUN$^\dagger$ datasets, and use \textit{CLIP ViT-B/32} for ImageNet$^\dagger$. For evaluating our method, we set $K_{\text{beam}}$ (Algorithm \ref{algo:fr}) to 1, based on ablation studies (see Appendix \ref{subsec:kbeam}). When evaluating DROR and ECOR on our multi-rationale benchmark, we retrieve the top-$M$ (category, rationale) pairs per sample and apply our evaluation metrics (\ref{subsubsec:EM}). For fine-tuning, we use visual prompt-tuning \cite{jia2022visual} with three shallow learnable prompts for CIFAR-10$^\dagger$, CIFAR-100$^\dagger$, Food-101$^\dagger$, and Caltech-101$^\dagger$, 100 shallow prompts for SUN$^\dagger$, and 30 deep learnable prompts for ImageNet$^\dagger$. We use a learning rate of 0.004 with a CosineAnnealingLR scheduler \cite{loshchilov2016sgdr}, including a 20\% warmup phase.

\begin{table}[tb]
    \centering
    \footnotesize
    \label{tab:eval_individual}
    \caption{Performance on individual datasets. \textbf{RR} (correct category and rationale) is the only metric to maximise; \textbf{RW}, \textbf{WR}, and \textbf{WW} are error types to minimise. Best results per setting are in bold. ZS = Zero-Shot, FT = Fine-Tuned.}
    
      \begin{tabular}{l l a|a c|c c|c c|c}
        \toprule
        \multirow{2}{*}{\textbf{Dataset}} & \multirow{2}{*}{\textbf{Method}} & \multicolumn{2}{c}{\textbf{RR} $\uparrow$} & \multicolumn{2}{c}{\textbf{RW} $\downarrow$} & \multicolumn{2}{c}{\textbf{WR} $\downarrow$} & \multicolumn{2}{c}{\textbf{WW} $\downarrow$}\\
        & & ZS & FT & ZS & FT& ZS & FT & ZS & FT \\
        \midrule
        \multirow{3}{*}{CIFAR-10$^\dagger$} & DROR & 36.94 & 20.61 & 60.62 & 77.82 & 0.67 & 0.09 & 1.78 & 1.48\\
        & ECOR & 34.02 & 20.00 & 64.14 & 77.29 & 55.04 & 0.44 & 1.28 & 2.27\\
        & Ours & \textbf{41.18} & \textbf{58.92} & 57.29 & 39.55 & 0.00 & 0.06 & 1.53 & 1.47\\
        
        \midrule
         \multirow{3}{*}{CIFAR-100$^\dagger$} & DROR & 17.99 & 9.45 & 70.02 & 78.66 & 1.50 & 0.96 & 10.49 & 10.93\\
         & ECOR & 21.41 & 11.93 & 67.56 & 74.59 & 1.40 & 1.53 & 9.63 & 11.96\\
         & Ours & \textbf{22.56} & \textbf{28.48} & 46.12 & 50.54 & 3.18 & 2.22 & 28.14 & 18.77\\
         
        \midrule
         \multirow{3}{*}{Food-101$^\dagger$} & DROR & 15.21 & 8.10 & 70.90 & 76.91 & 1.63 & 0.69 & 12.26 & 14.31\\
         & ECOR & 21.88 & 11.72 & 65.41 & 73.05 & 1.80 & 1.25 & 10.92 & 13.80\\
         &  Ours &  \textbf{22.15} & \textbf{26.26} & 39.91 & 44.37 & 3.23 & 3.86 & 34.70 & 25.52\\
    
        \midrule
         \multirow{3}{*}{Caltech-101$^\dagger$}  & DROR & 19.81 & 11.94 &	68.81 & 75.56 &	1.90 & 1.06 & 9.48 & 11.44\\
        & ECOR & \textbf{24.85} & 14.35 & 64.38 & 70.27 & 1.91 & 2.19 &	8.86 & 13.20\\
        & Ours & 23.00 & \textbf{28.80} & 36.40 & 41.15 & 4.46 & 2.85 &	36.15 & 27.19\\
        
        \midrule
         \multirow{3}{*}{SUN$^\dagger$} & DROR&	11.96 & 5.82 	&46.14 & 44.39	&5.18 & 2.53	&38.73 & 42.26\\
         & ECOR &  12.78 & 6.59	&45.00 & 45.06 &	5.42 & 3.67	&36.80 & 44.68\\
         & Ours & \textbf{12.84} & \textbf{15.73}	&33.00 & 34.56	&4.84 & 4.35	&49.32 & 45.36\\
    
        \midrule
         \multirow{3}{*}{ImageNet$^\dagger$} & DROR & 2.92 & 4.75 & 56.95 & 55.58 & 1.88 & 2.22 & 38.26 & 37.45\\
         & ECOR & 4.83 & 6.99 & 51.39 & 49.43 & 3.29 & 3.59 & 40.48 & 39.99\\
         & Ours & \textbf{8.35} & \textbf{12.61} & 20.82 & 28.86 & 3.92 & 3.58 & 66.91 & 54.96\\
        \bottomrule
  \end{tabular}
\end{table}

\subsection{Results}
\subsubsection{Zero-shot Results}
The four ``ZS'' columns in Table \ref{tab:eval_individual} show the zero-shot results. While our method does not require any training for zero-shot evaluation, DROR and ECOR must be trained on each dataset from the DROR benchmarks (single rationale) before evaluation on our multi-rationale benchmark. Since the DROR benchmark includes all the rationales present in ours, DROR and ECOR benefit from prior exposure to the rationales and categories used in our benchmark. Despite this advantage, our approach outperforms both methods in the \textbf{RR} metric (the primary metric of interest) across all datasets except Caltech-101, where it ranks second. These results highlight the generalisability and zero-shot capability of our method, which can be deployed without fine-tuning while still achieving reliable performance.

\subsubsection{Fine-tuning Results}\label{subsubsec:FT}
Table \ref{tab:eval_individual} also presents fine-tuned results on the multi-rationale benchmark (``FT'' columns). As shown, our method’s performance improves across all datasets compared to its zero-shot counterpart. In contrast, the performances of DROR and ECOR degrade significantly, indicating their limitations when fine-tuned on a multi-rationale benchmark. These methods are unable to jointly model multiple rationales for a single image and instead attempt to optimise each rationale independently, which leads to suboptimal results.

\subsection{Cross-Dataset Evaluation}
To further assess the generalisability of the methods, Table \ref{tab:cross_eval} presents cross-dataset evaluation results, where models are trained on one dataset and tested on others. Our zero-shot results remain unchanged from Table \ref{tab:eval_individual} in the paper, as no training is involved. However, for ECOR and DROR, the zero-shot results vary due to their training on corresponding single-rationale datasets. Among the datasets, Food-101 serves as an out-of-distribution (OOD) dataset due to its narrow focus on food categories, while the others cover more general domains. Our method achieves state-of-the-art performance in both zero-shot and fine-tuned settings, with only minor degradation compared to Table \ref{tab:eval_individual} in the paper, demonstrating strong transferability even after fine-tuning.

\begin{table}[t]
    \centering
    \footnotesize
    \label{tab:cross_eval}
    \caption{Cross-dataset evaluation results. Performance on individual datasets. \textbf{RR} (correct category and rationale) is the only metric to maximize; \textbf{RW}, \textbf{WR}, and \textbf{WW} are error types to minimize. Best results per setting are in bold. ZS = Zero-Shot, FT = Fine-Tuned.}
    
      \begin{tabular}{l l a|a r|r r|r r|r}
        \toprule
        \multirow{2}{*}{\begin{tabular}[c]{@{}l@{}}\textbf{Train Dataset}\\ → \textbf{Test Dataset}\end{tabular}} & \multirow{2}{*}{\textbf{Method}} & \multicolumn{2}{c}{\textbf{RR} $\uparrow$} & \multicolumn{2}{c}{\textbf{RW} $\downarrow$} & \multicolumn{2}{c}{\textbf{WR} $\downarrow$} & \multicolumn{2}{c}{\textbf{WW} $\downarrow$}\\
        & & ZS & FT & ZS & FT& ZS & FT & ZS & FT \\
        \midrule
        
        \multirow{3}{*}{\begin{tabular}[c]{@{}l@{}}CIFAR-100$^\dagger$\\ → CIFAR-10$^\dagger$\end{tabular}} & DROR  & 31.64 & 14.50 &	65.30 & 82.27& 0.67 & 0.26 & 2.39 & 2.97\\
        & ECOR & 32.38 & 15.90 & 64.87 & 80.26& 88.18 &0.79 &	1.87 & 3.06\\
        & Ours & \textbf{41.18} & \textbf{43.92}	& 57.29 & 54.25 & 0.00 & 0.08 & 1.53 & 1.76\\
        \midrule
        
        \multirow{3}{*}{\begin{tabular}[c]{@{}l@{}}CIFAR-100$^\dagger$\\ → Food-101$^\dagger$\end{tabular}} & DROR  & 3.50 & 1.15 &	76.23 & 73.91	&1.43 & 0.63 &	18.84 & 24.31\\
        & ECOR  & 15.24 & 3.73&	64.57& 60.43	&2.71& 2.15	& 17.47 & 33.68\\
        & Ours & \textbf{22.15} & \textbf{18.38}	& 39.91 & 37.98 &	3.23 & 3.21	&34.70 & 40.43\\
        \midrule
        
        \multirow{3}{*}{\begin{tabular}[c]{@{}l@{}}CIFAR-100$^\dagger$\\ → Caltech-101$^\dagger$\end{tabular}} & DROR  & 13.19&5.54	&72.34&76.88&	2.05&1.11&	12.42& 16.47\\
        & ECOR  & 20.74 & 10.54 &	65.61 & 68.33 & 	2.52 & 2.59 & 	11.13 & 18.54\\
        & Ours & \textbf{23.00} & \textbf{23.44} &	36.40 & 39.44 &	4.46 & 4.14 & 	36.15 & 32.98\\
        \midrule
        
        \multirow{3}{*}{\begin{tabular}[c]{@{}l@{}}Caltech-101$^\dagger$\\ → CIFAR-10$^\dagger$\end{tabular}} & DROR  & 28.33 & 13.89 &	66.78 & 80.96 &	1.70 & 0.87 &	3.20 & 4.28\\
        & ECOR  & 31.55 & 16.33 &	65.08 & 79.48 &	0.88 & 0.79 &	2.48 & 3.41\\
        & Ours & \textbf{41.18} & \textbf{43.64} &	57.29 & 52.38 &	0.00 & 0.15 & 1.53 & 3.82\\
        \midrule
        
        \multirow{3}{*}{\begin{tabular}[c]{@{}l@{}}Caltech-101$^\dagger$\\ → CIFAR-100$^\dagger$\end{tabular}} & DROR  & 9.06 & 4.32 &	74.91 & 75.35 &	2.18 & 1.94 &	13.84 & 18.39\\
        & ECOR  & 16.31 & 7.02 &	67.67 & 71.58 &	2.28 & 2.33 &	13.75 & 19.07\\
        & Ours & \textbf{22.56} & \textbf{23.87} & 46.12 & 50.38 & 3.18 & 2.07 & 	28.14 & 23.69\\
        \midrule

        \multirow{3}{*}{\begin{tabular}[c]{@{}l@{}}Caltech-101$^\dagger$\\ → Food-101$^\dagger$\end{tabular}} & DROR  & 3.89	& 1.23 & 75.05 & 77.24 &	1.61 & 0.81 &	19.46 & 20.72\\
        & ECOR  & 14.50 & 4.66 &	63.47 & 70.85 &	2.57 & 1.44 & 	19.47 & 23.06\\
        & Ours & \textbf{22.15} & \textbf{19.96} & 	39.91 & 39.08 & 	3.23 & 3.10 & 	34.70 & 37.85\\
        \midrule
        
        \multirow{3}{*}{\begin{tabular}[c]{@{}l@{}}SUN$^\dagger$\\ → CIFAR-10$^\dagger$\end{tabular}} & DROR  & 25.66 & 10.83 &	71.29 & 86.29 & 	1.00 & 0.26 & 	2.06 & 2.62\\
        & ECOR  & 30.41 & 13.97 &	67.76 & 82.88 &	0.49 & 0.52 & 	1.35 & 2.62\\
        & Ours & \textbf{41.18} & \textbf{39.23} &	57.29 & 57.10&	0.00 & 0.00 & 	1.53 & 3.67\\
        \midrule
        
        \multirow{3}{*}{\begin{tabular}[c]{@{}l@{}}SUN$^\dagger$\\ → CIFAR-100$^\dagger$\end{tabular}} & DROR & 10.49 & 4.55 & 	72.33 & 75.36 & 	1.88 & 1.52 & 	15.30 & 18.56\\
        & ECOR  & 15.58 & 6.86 & 	68.36 & 70.56 &	2.09 & 1.85 & 	13.97 & 20.72\\
        & Ours & \textbf{22.56} & \textbf{20.59} &	46.12 & 45.05 &	3.18 & 2.73 & 	28.14 & 31.62\\
        \midrule
        
        \multirow{3}{*}{\begin{tabular}[c]{@{}l@{}}SUN$^\dagger$\\ → Food-101$^\dagger$\end{tabular}} & DROR  & 5.57 & 2.08 & 	66.98 & 65.82 &	2.71 & 1.87 &	24.73 & 30.23\\
        & ECOR  & 13.16 & 4.62 & 	62.21 & 56.24 & 	2.69& 2.32 &	21.94 & 36.83\\
        & Ours & \textbf{22.15} & \textbf{16.59} &	39.91 & 33.43 & 	3.23 & 2.94 & 	34.70 & 47.03\\
        \midrule

        \multirow{3}{*}{\begin{tabular}[c]{@{}l@{}}SUN$^\dagger$\\ → Caltech-101$^\dagger$\end{tabular}} & DROR  & 11.10 & 4.97 &	71.55 & 74.40 & 	2.26 & 1.24 & 	15.09 & 19.39\\
        & ECOR  & 19.28 & 8.84 &	63.62	& 69.31 & 2.99 & 2.03 & 	14.10 & 19.82\\
        & Ours & \textbf{23.00} & \textbf{19.23} & 	36.40 & 38.64 & 	4.46 & 3.58 &	36.15 & 38.55\\
        \midrule

        \multirow{3}{*}{\begin{tabular}[c]{@{}l@{}}ImageNet$^\dagger$\\ → CIFAR-10$^\dagger$\end{tabular}} & DROR  & 11.44 & 14.24 & 82.88 & 80.79 & 0.87 & 0.70 & 4.80 & 4.28\\
        & ECOR  & 13.80 & 14.32 & 79.30 & 77.64 & 1.22 & 0.70 & 5.68 & 7.34\\
        & Ours & \textbf{41.18} & \textbf{42.97} &	57.29 & 53.67 &	0.00 & 0.67 & 	1.53 & 2.69\\
        \midrule

         \multirow{3}{*}{\begin{tabular}[c]{@{}l@{}}ImageNet$^\dagger$\\ → CIFAR-100$^\dagger$\end{tabular}} & DROR  & 6.38 & 7.29 & 70.16 & 68.64 & 2.10 & 1.96 & 21.36 &  22.11\\
        & ECOR  & 6.93 & 8.52& 64.98 & 64.15 & 2.88 & 2.67 & 25.20 & 24.66 \\
        & Ours & \textbf{22.56} & \textbf{23.71} &	46.12 &  45.04 &	3.18 &  3.17 & 	28.14 & 28.08\\
        \midrule
        
         \multirow{3}{*}{\begin{tabular}[c]{@{}l@{}}ImageNet$^\dagger$\\ → Food-101$^\dagger$\end{tabular}} & DROR  & 1.66 & 1.92 & 68.38 & 60.92 & 1.28 & 1.58 & 28.68 & 35.59\\
        & ECOR  & 5.83 & 4.52 & 64.16 & 54.16 & 1.88 & 2.00 & 28.13 &  39.32\\
        & Ours & \textbf{22.15} & \textbf{13.12} &	39.91 &  30.90 & 	3.23 &  3.91 & 	34.70 & 52.07\\
        \midrule
        
         \multirow{3}{*}{\begin{tabular}[c]{@{}l@{}}ImageNet$^\dagger$\\ → Caltech-101$^\dagger$\end{tabular}} & DROR  & 7.45 & 8.88 & 69.93 & 67.64 & 2.41 & 2.34 & 20.21 & 21.14\\
        & ECOR & 9.95 & 10.55 & 64.34 & 62.11 & 2.99 & 3.35 & 22.72 & 23.98\\
        & Ours & \textbf{23.00} & \textbf{22.76} & 	36.40 &  37.46 & 	4.46 & 4.43 &	36.15 & 35.35\\
        \midrule
        
        \multirow{3}{*}{\begin{tabular}[c]{@{}l@{}}ImageNet$^\dagger$\\ → SUN$^\dagger$\end{tabular}} & DROR  & 1.43 & 1.34 & 32.69 & 29.66 & 3.04 & 2.97 & 62.84 & 66.03\\
        & ECOR  & 2.15 & 6.99 & 28.32 & 49.43 & 4.17 & 3.59 & 65.35 & 39.99\\
        & Ours & \textbf{12.84} & \textbf{10.53} &33.00 & 27.42 & 4.84 & 3.96	& 49.32 & 58.09\\
        \bottomrule
  \end{tabular}
   
\end{table}

\section{Conclusion}
In this work, we introduced a new multi-rationale benchmark and a training-free Contrastive Conditional Inference (CCI) framework that substantially advances explainable object recognition using CLIP. Our approach enables stronger conditioning on rationales and achieves state-of-the-art performance across various settings, including zero-shot evaluation.

Our CCI framework is not tied to a specific modality and can be extended to other conditioning tasks such as compositional image retrieval (CIR), where the conditions include a query text and a reference image, and the target is an image, unlike our task, where the target is a text. Since our current framework is limited to contrastive models like CLIP, a promising future direction is to explore the applicability of our method to generative vision-language models, such as LLAVA.

\clearpage
\bibliographystyle{plainnat}
\bibliography{references.bib}

\clearpage
\appendix

\section{Proofs}
\subsection{Proof of Theorem 1}\label{subsec:proof1}
Given a unit query embedding $c\in \mathbf{R}^d$ and unit condition embeddings, $\hat{r_0}, \hat{r}_1, \hat{r}_2, \dots, \hat{r}_M$, where $\hat{r}_0 \equiv x$ for the sake of notation. We provide Theorem 1:
\begin{theorem}\label{th:1}
    If embedding $c$ is as close as possible to all embeddings $\{\hat{r}_i\}_{i\in [M]}$ (in terms of $\odot$), then it lies on the hyperplane of $\{\hat{r}_i\}_{i\in [M]}$.
\end{theorem}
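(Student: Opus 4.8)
The plan is to make the informal phrase "as close as possible to all" precise and then reduce the claim to an elementary orthogonal-decomposition argument. I would read Assumption~\ref{assump:1} as: among all unit vectors, $c$ maximizes the aggregate similarity $f(c) := \sum_{i=0}^{M} c \odot \hat r_i$ to the conditioning embeddings (writing $\hat r_0 \equiv x$ throughout); equivalently, using the Pareto-style order underlying Definition~\ref{def:DD}, $c$ is not dominated by any other unit vector. Both readings lead to the same conclusion, so I would state the reduction once and then proceed.

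I would first decompose $\mathbf{R}^d = V \oplus V^\perp$ where $V := \operatorname{span}\{\hat r_0, \hat r_1, \dots, \hat r_M\} = S(x, \hat r_1, \dots, \hat r_M)$, and write $c = c_\parallel + c_\perp$ with $c_\parallel \in V$ the orthogonal projection of $c$ and $c_\perp \in V^\perp$. The key observation is that $c_\perp$ contributes nothing to any similarity: $c \odot \hat r_i = (c_\parallel + c_\perp) \odot \hat r_i = c_\parallel \odot \hat r_i$ for every $i$, since $c_\perp \perp \hat r_i$. Hence the objective $f$ — and, more generally, the entire similarity profile $(c\odot\hat r_0,\dots,c\odot\hat r_M)$ — depends on $c$ only through $c_\parallel$.

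Next I would exploit the unit-norm constraint: $1 = |c|^2 = |c_\parallel|^2 + |c_\perp|^2$, so $|c_\parallel| \le 1$ with equality iff $c_\perp = 0$. Suppose, for contradiction, $c_\perp \neq 0$; then $|c_\parallel| < 1$, and $c_\parallel \neq 0$ (if $c_\parallel = 0$ then $c$ is orthogonal to every $\hat r_i$, which is not optimal, since any $c$ aligned with $\sum_i \hat r_i$ does strictly better — here one uses the mild non-degeneracy $\sum_i \hat r_i \neq 0$, which holds for semantically related CLIP embeddings with pairwise positive similarity). Consider the unit vector $c' := c_\parallel / |c_\parallel| \in V$. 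Then $c' \odot \hat r_i = \tfrac{1}{|c_\parallel|}\,(c \odot \hat r_i)$ with $\tfrac{1}{|c_\parallel|} > 1$, so each nonnegative similarity is strictly increased and none is decreased; thus $f(c') > f(c)$ (and $c'$ dominates $c$ in the Pareto sense), contradicting optimality of $c$. Therefore $c_\perp = 0$, i.e.\ $c \in V = S(x, \hat r_1, \dots, \hat r_M)$, which is the claim.

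As a complementary and sharper route that also pins down the maximizer, I would note that by Cauchy--Schwarz $f(c) = c \odot \big(\sum_i \hat r_i\big) \le \big|\sum_i \hat r_i\big|$ with equality iff $c$ is the positive multiple $\hat d := \sum_i \hat r_i / \big|\sum_i \hat r_i\big|$ of $\sum_i \hat r_i$; this vector is manifestly in $V$, reproves the theorem, recovers vanilla CLIP inference in the single-condition case $M=0$ (then $c = x$), and motivates the desirable direction $\hat d$ used in Algorithm~\ref{algo:cci}. The main obstacle is not the algebra but the modelling step: committing to a precise meaning of "as close as possible to all" and recording the mild positivity/non-degeneracy hypothesis that the dominance argument needs; I would also add a remark that $V$ is a genuine hyperplane only when $M+1 = d-1$ and is otherwise a lower-dimensional subspace, so "hyperplane" is used loosely here.
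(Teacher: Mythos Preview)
Your proof is correct and follows essentially the same route as the paper: decompose $c = c_\parallel + c_\perp$ with respect to $V = \operatorname{span}\{\hat r_i\}$, note that only $c_\parallel$ contributes to each similarity, and then renormalize to $c_\parallel/|c_\parallel|$ to strictly improve whenever $c_\perp \neq 0$. Your version is more careful than the paper's (you make the nonnegativity hypothesis and the $c_\parallel = 0$ edge case explicit, and your Cauchy--Schwarz remark additionally identifies the maximizer as $\hat d$), but the underlying argument is identical.
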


\begin{proof}
    Define hyperplane $S(\hat{r}_0, \hat{r}_1, \hat{r}_2, \dots, \hat{r}_M)$, $c_\parallel$ as projected component of $c$ on this hyperplane, and $c_\perp$ as the perpendicular component. We have:

    \begin{align}
        \text{sim}(c,x_i) = c_\parallel \odot \hat{r}_i = |c_{\parallel}|\cos{\theta_{c\parallel,\hat{r}_i}} ,\quad \forall i\in [M]
    \end{align}

Where $|c_{\parallel}\leq 1|$. Now, removing the perpendicular component, $c_\perp$, we have:

\begin{align}
    c_{\parallel}^{\text{new}} &= \sqrt{c_\parallel ^ 2 + c_{\perp}^2} = 1 \\
    \text{sim}(c^{\text{new}}, \hat{r}_i) &= c_{\parallel}^{\text{new}} \odot \hat{r}_i = \cos{\theta_{c_{\parallel}, \hat{r}_i}} \geq \text{sim}(c,\hat{r}_i),\quad \forall i\in [M]
\end{align}
\end{proof}

\subsection{Proof of Theorem 2}\label{subsec:proof2}
Here we provide the proof of Theorem 2 for one rationale $\hat{r}$ (the proof for multiple rationales is analogous).

\begin{theorem}\label{th:2} 
    Given a direction $d = \frac{w_0\, x + w_1\, \hat{r}}{\left|w_0\, x + w_1\, r\right|}$ in hyperplane $S(x,\hat{r})$. If $\hat{r}\odot x \geq 0$, one can always get a more desirable direction by inverting $w_0$ or $w_1$ if $w_0\le 0$ or $w_1\le 0$.
\end{theorem}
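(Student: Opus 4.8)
The plan is to establish the desirability claim directly from Definition~\ref{def:DD}, which only requires comparing dot products of the candidate directions against $x$ and against $\hat r$. Since the normalisation $|w_0 x + w_1 \hat r|$ is a positive scalar that only rescales all dot products uniformly, it suffices to work with the \emph{unnormalised} vector $v = w_0 x + w_1 \hat r$ and show that flipping the sign of a non-positive coefficient produces a vector $v'$ with $v'\odot x \ge v\odot x$ and $v'\odot \hat r \ge v\odot \hat r$ (up to the common positive rescaling, which does not affect the $\succeq$ relation). First I would write out the two quantities of interest: $v\odot x = w_0 + w_1 (\hat r\odot x)$ and $v\odot \hat r = w_0(\hat r\odot x) + w_1$, using that $x$ and $\hat r$ are unit vectors. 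Let $\rho := \hat r\odot x \ge 0$ denote their (non-negative, by hypothesis) inner product.

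Next I would do the case analysis on which coefficient is flipped. Suppose $w_1 \le 0$ and we replace it by $-w_1 \ge 0$, giving $v' = w_0 x - w_1 \hat r$. Then $v'\odot x - v\odot x = (-w_1 - w_1)\rho = -2w_1\rho \ge 0$ since $w_1\le 0$ and $\rho\ge 0$; and $v'\odot\hat r - v\odot\hat r = -w_1 - w_1 = -2w_1 \ge 0$. So both inner products weakly increase, hence $v' \succeq v$ (after the common positive normalisation). The case $w_0\le 0$ is symmetric, swapping the roles of $x$ and $\hat r$: replacing $w_0$ by $-w_0$ gives increments $-2w_0 \ge 0$ in $v\odot x$ and $-2w_0\rho \ge 0$ in $v\odot\hat r$. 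In either case the flipped direction dominates the original in the sense of Definition~\ref{def:DD}, which is exactly the claim. For the general $M$-rationale statement one repeats the same computation: with $\hat r_0 \equiv x$ and all pairwise inner products $\hat r_i\odot\hat r_j\ge 0$, flipping $w_j\le 0$ changes $v\odot\hat r_i$ by $-2w_j(\hat r_i\odot\hat r_j)\ge 0$ for every $i$, so the result follows coordinatewise.

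The main subtlety — and the step I would be most careful about — is the role of the normalisation and the pairwise non-negativity assumption. The desirability relation is defined on unit directions, so strictly one must check that rescaling $v'$ to unit norm preserves the inequalities; this is immediate because $\succeq$ compares $d\odot x$ and $d\odot\hat r$ and dividing a vector by a positive constant scales both sides of each inequality by the same positive factor, leaving the direction of the inequality intact (one can also phrase the whole argument in terms of the partial order induced on the cone of vectors $\{v : v\odot\hat r_i \text{ large}\}$). The other point requiring attention is that the $M$-rationale version implicitly needs $\hat r_i\odot\hat r_j\ge 0$ for all $i,j$ (the single-condition statement only needs $\hat r\odot x\ge 0$), which holds in the regime the paper cares about since CLIP embeddings of semantically compatible concepts have non-negative similarity; I would state this as the hypothesis rather than prove it. Finally I would remark that iterating the flip over all indices in $\tilde W$ — each step weakly improving and the process terminating since $|\tilde W|$ is finite — yields a direction with all non-negative coefficients that dominates the original, recovering the ``positive weighted average'' interpretation used in the main text.
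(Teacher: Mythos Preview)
Your argument contains a genuine gap in the handling of the normalisation. You claim that because the normalisation is a positive scalar it ``suffices to work with the unnormalised vector'', and you later justify this by saying that ``dividing a vector by a positive constant scales both sides of each inequality by the same positive factor''. But the two directions being compared, $d=v/|v|$ and $d'=v'/|v'|$, are obtained by dividing \emph{different} vectors by \emph{different} positive constants: $|v|=|w_0x+w_1\hat r|$ and $|v'|=|w_0x-w_1\hat r|$ are not equal in general. Concretely, in the mixed-sign case $w_0>0$, $w_1<0$ one has
\[
|v'|^{2}-|v|^{2}=-4\,w_0 w_1\,\rho\;\ge\;0,
\]
so flipping $w_1$ \emph{increases} the norm. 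Hence from $v'\odot x\ge v\odot x$ you cannot immediately conclude $\dfrac{v'\odot x}{|v'|}\ge\dfrac{v\odot x}{|v|}$: the larger denominator works against you, and whether the normalised inequality survives is precisely the nontrivial content of the theorem.

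This is exactly the work the paper's proof does and that your sketch omits. The paper keeps the normalisation explicit, notes that the flipped numerator $-w_1+w_0\rho$ is non-negative (so if the original side is negative the inequality is immediate), and in the remaining case squares both sides and cross-multiplies to reduce the claim to the elementary inequality $x\odot\hat r\ge (x\odot\hat r)^3$, valid because $x\odot\hat r\in[0,1]$. Without an argument controlling the ratio of numerator growth to norm growth, the desirability comparison on \emph{unit} directions is not established. Your multi-rationale extension inherits the same gap: flipping a single $w_j$ changes $|v|$, and the coordinatewise increase of the unnormalised $v\odot\hat r_i$ is by itself insufficient.
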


\begin{proof}
    The assumption $\hat{r}\odot x \geq 0$ is natural since the ground-truth rationale $r$ exists in the image $x$. At the first, every direction in the $S(r,x)$ can be written as $d = \frac{w_0 x + w_1 r}{|w_0 x+ w_1 r|}$, for some $w_0, w_1$. If both $w_0, w_1 < 0$, then we have:
    \begin{align}
        \text{sim}(d, r) = \frac{w_1 + w_0 \, x\odot r}{|w_1 r + w_0 x|} \leq \frac{-w_1 - w_0\, x\odot r}{|w_1 r + w_0 x|}
    \end{align}

    Which means that inverting $w_0, w_1$ increases the similarity. The same result holds for $\text{sim}(d,x)$. Now, we prove that if one of $w_i<0$ and the other is positive, then by inverting $w_i$, we have another direction that has more similarity with $r$ and $x$. So, without loss of generality, assume that $w_1 < 0,  w_0 \geq 0$:
    \begin{align}
        \text{sim}(d, r) &= \frac{w_1 + w_0\, x\odot r}{|w_1 r + w_0 x|} \leq \frac{-w_1 + w_0\, x\odot r}{|w_1 r + w_0 x|} \label{eq:11}
    \end{align}
    The RHS of Eq.~\ref{eq:11} could be negative, and proof will be complete. But if it is positive, squaring both sides leads to:
    \begin{align}
        \frac{w_1^2 + w_0^2\, (x\odot r)^2 + 2w_1w_0\, x\odot r}{w_1^2 + w_0 ^2 + 2w_1w_0\, x\odot r} \leq \frac{w_1^2 + w_0^2\, (x\odot r)^2 - 2w_1w_0\, x\odot r}{w_1^2 + w_0 ^2 - 2w_1w_0\, x\odot r}
    \end{align}
    Multiplying both sides into $(w_1^2 + w_0 ^2 + 2w_1w_0\, x\odot r)(w_1^2 + w_0 ^2 - 2w_1w_0\, x\odot r)$ and doing some algebra simplification leads to:
    \begin{align}
        x\odot r \geq (x\odot r)^3 \label{eq:13}
    \end{align}
    which is true since $0 \leq x\odot r = \cos{\theta}\leq 1$. For $\text{sim}(d,x)$ the Eq.~\ref{eq:13} will be obtained exactly.
\end{proof}

\begin{table}[t]
    \centering
    \footnotesize
    \begin{tabular}{l | c c c c}
         \toprule
         \textbf{Dataset} & \textbf{No. of Images} & \textbf{No. of Categories} & \textbf{No. of Rationales} & \textbf{No. of Rationales / Image} \\
         \midrule 
         
         CIFAR-10$^\dagger$ & 1,773 & 10 & 56 & $\sim 3.5$\\
         CIFAR-100$^\dagger$ & 13,832 & 98 & 459 & $\sim 3.1$\\
         Food-101$^\dagger$ & 12,676 & 90 & 393 & $\sim 3.4$\\
         Caltech-101$^\dagger$ & 13,035 & 96 & 451 & $\sim 2.9$\\
         SUN$^\dagger$ & 53,975 & 396 & 1,694 & $\sim 3.3$\\
         ImageNet$^\dagger$ & 218,420 & 994 & 4,158 & $\sim 2.9$\\
         
         \bottomrule
    \end{tabular}
    \caption{Datasets Statistics}
    \label{tab:DS}
\end{table}

\begin{table}[t]
    \centering
    \footnotesize
    \begin{tabular}{l|l|accc}
         \toprule
         \textbf{Dataset} & \textbf{$K_{\text{beam}}$} & \textbf{RR $\uparrow$} & \textbf{RW $\downarrow$} & \textbf{WR $\downarrow$} & \textbf{WW $\downarrow$} \\
         \midrule

         \multirow{4}{*}{CIFAR-10} 
         & 1 & \textbf{41.18} & 56.98 & 0.00 & 1.83 \\
         & 3 & 39.61 & 52.14 & 1.57 & 6.68\\
         & 5 & 39.30 & 51.83 & 1.88 & 6.99\\
         & 10 & 39.30 & 51.83 & 1.88 & 6.99\\
         \midrule

          \multirow{4}{*}{CIFAR-100} 
         & 1 & \textbf{22.56} & 46.12 & 3.18 & 28.14 \\
         & 3 & 20.15 & 35.38 & 5.59 & 38.88 \\
         & 5 & 20.11 & 35.27 & 5.63 & 38.99 \\
         & 10 & 20.11 & 35.27 & 5.63 & 38.99\\
         \midrule
         
         \multirow{4}{*}{Food-101} 
         & 1 & \textbf{22.15} & 39.91 & 3.23 & 34.70 \\
         & 3 & 18.44 & 22.51 & 6.94 & 52.10 \\
         & 5 & 18.28 & 21.84 & 7.11 & 52.77 \\
         & 10 & 18.24 & 21.75 & 7.14 & 52.86 \\
         
         \bottomrule
          
    \end{tabular}
    \caption{Effect of $K_{\text{beam}}$ on Algorithm 2 when implemented with beam search.}
    \label{tab:k_beam}
\end{table}

\section{Multi-Rationale Benchmark} \label{sec:DS}
Table \ref{tab:DS} provides the statistics of our new multi-rationale datasets. On average, we have about $\sim 3-4$ rationales per image, which enhance the explainability and reasoning of the model for its predictions.

\section{Visualisations} \label{sec:visual}
In this section, we provide the saliency maps \cite{chefer2021generic} of our conditioning approach, Algorithm 1, and previous approaches, using the \textit{$<$because$>$} in the textual prompt. As shown in Figure \ref{fig:saliency}, our method provides more accurate maps utilizing the ground-truth rationale and category, which shows its superiority compared to previous methods.
\begin{figure}[t]
    \centering
    \includegraphics[width=\linewidth]{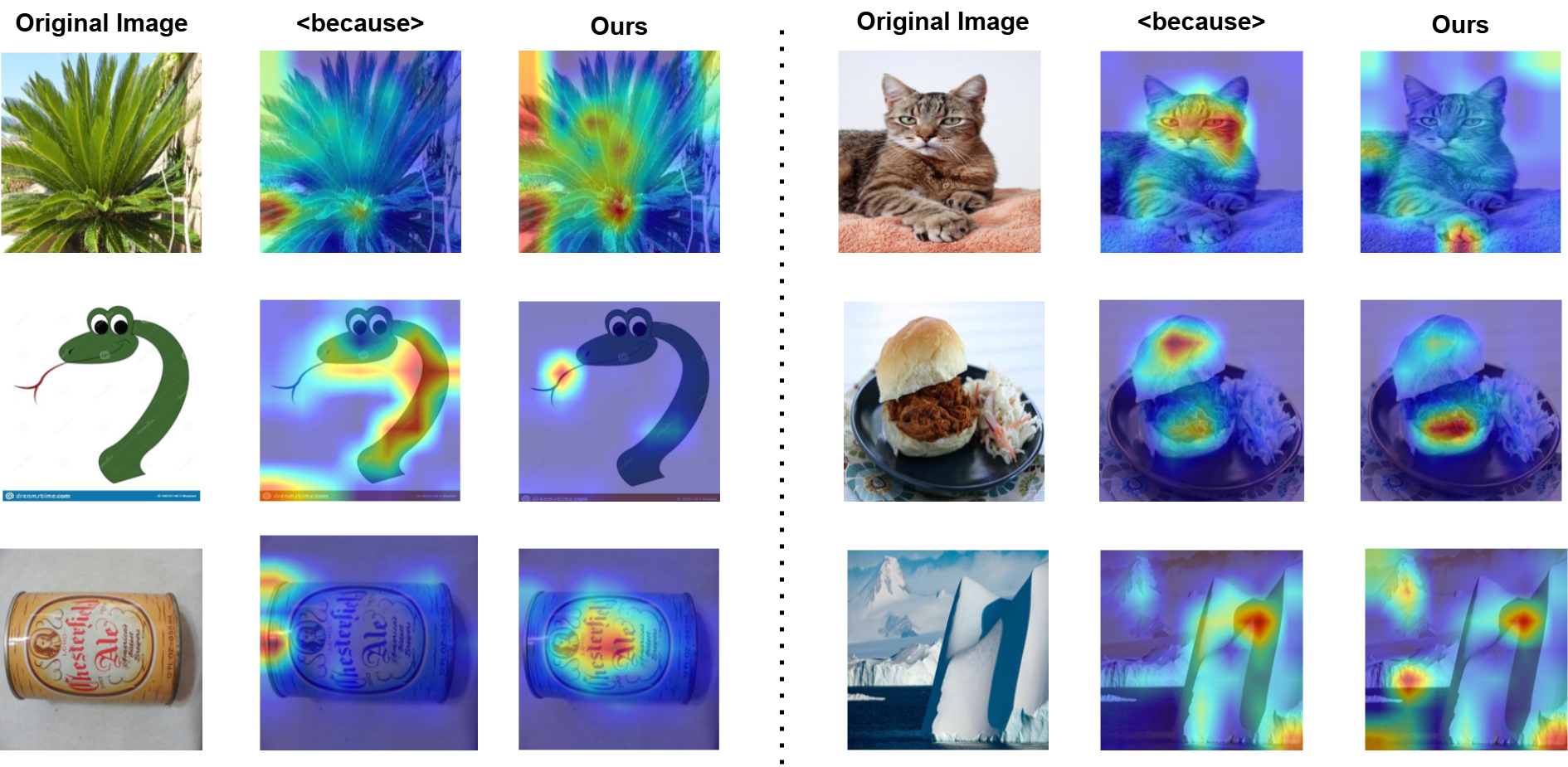}
    \caption{Saliency Maps. The left column of each part shows the original image, the middle columns show the saliency map with the prompt "A photo of a $<$category$>$ because there is $<$rationale$>$". The right column shows the saliency of the ground-truth category and rationale via our method. Ground-truth categories and rationales for the left part: $<$palm tree, long leaves$>$, and $<$snake, long tongue$>$, $<$can, information label$>$. Ground-truth categories and rationales for the right part: $<$cat, furry body$>$, $<$pulled pork sandwich, pulled pork$>$, and $<$iceberg, 
 jagged and irregular shapes of ice$>$.}.
    \label{fig:saliency}
\end{figure}

\section{Additional Experiments}

\subsection{Effect of K\textsubscript{beam}} \label{subsec:kbeam}
In this section, we analyse the effect of $K_\text{beam}$ if we implement Algorithm 2 with beam search on rationales and categories. Table \ref{tab:k_beam} represents the results for datasets CIFAR-10, CIFAR-100, and Food-101. As can be seen, with increasing $K_{\text{beam}}$, the performance (RR) degraded a bit, which is surprising. We believe the cause of this effect lies at the core of our algorithm, where in Theorem \ref{th:2}, we assume every inner product in the set of \{rationales, image\} is positive. When the size of the beam search increases, this assumption can be violated, which causes degradation in performance.

\end{document}